\documentclass[final]{hld2026}

\usepackage{booktabs}
\usepackage{subcaption}

\graphicspath{{../preliminary/}}

\title[Reward-Aware Population Scaling of ES]{Reward-Aware Population Scaling of \\ Evolutionary Strategies in LLM Fine-Tuning}

\hldauthor{%
\Name{Sung Cho} \Email{sungcho9@engineering.upenn.edu}\\
\Name{Gyubin Han} \Email{gyubinh@engineering.upenn.edu}\\
\addr University of Pennsylvania}

\newcommand{\E}{\mathbb{E}}
\newcommand{\Prob}{\mathbb{P}}
\newcommand{\R}{\mathbb{R}}
\newcommand{\1}{\mathbf{1}}
\newcommand{\ce}{\mathrm{CE}}
\newcommand{\acc}{\mathrm{acc}}
\newcommand{\KN}{K_N}
\newcommand{\Navail}{N_{\mathrm{avail}}}
\newcommand{\gradraw}{\hat g_{\mathrm{raw}}}
\newcommand{\gradnorm}{\hat g_{\mathrm{norm}}}

\begin{document}

\maketitle

\begin{abstract}
Using Evolutionary Strategies (ES) for fine-tuning large language models is attractive because it is memory-efficient, parallel, and compatible with black-box or discrete rewards. Yet its population-size conclusions conflict sharply: fine-tuning with cross-entropy (CE) reward succeeds with $N=1$~\citep{malladi2023mezo}, while binary-reward training often needs $N \approx 30$~\citep{qiu2025esscale}. We show this gap is largely about reward design and normalization, not population size. In the capable-model regime we study, z-score advantage normalization can cause $N=2$ to fail. Disabling normalization lets binary-reward ES with $N=2$ improve on GSM8K and TREC across capable models spanning 0.5B--7B, where the normalized variant collapses or degrades. This small-$N$ risk is set by reward granularity: binary accuracy reward induces a zero-advantage probability $q$ that depends in closed form on base accuracy, batch size, and intra-pair correctness correlation; a zero-training probe on Qwen2.5-Instruct/GSM8K matches the formula with mean absolute error 0.020 across 12 configurations and finds the availability threshold $N_{\mathrm{avail}}$ to be small in this capable-model regime. The implication is not that $N=2$ is universally sufficient, but that small-population failure in capable-model binary ES can be an implementation artifact rather than an intrinsic population limit.
\end{abstract}

\section{Introduction}
Zeroth-order and ES methods are appealing for LLM fine-tuning because they avoid backpropagation through long sequences, parallelize across perturbations, and accept black-box or discrete rewards. Yet even the most essential hyperparameter, population size $N$, looks contradictory across prior work. Malladi et al. showed ES fine-tuning can work with one antithetic pair (i.e., $N=1$) with CE reward function~\citep{malladi2023mezo}, whereas Qiu et al. claimed binary-reward ES often appears to need much larger populations (i.e., $N\approx 30$)~\citep{qiu2025esscale}. We argue this is not a paradox but a \emph{scaling law}: $N$ is the scaling variable, and reward granularity together with normalization choices fix where on the law a given setup sits.

$N$ controls how many reward-carrying perturbation directions are available per step. Below an availability threshold $\Navail$, almost no pair carries a usable signal and training stalls; reward granularity sets where $\Navail$ lies, and advantage normalization can shift the threshold by erasing reward-scale information at small $N$.

Reward sparsity is not unique to ES; first-order LLM reinforcement learning (RL) has motivated dense process and token-level reward schemes precisely because terminal binary supervision is weak~\citep{lightman2023verify,wang2023mathshepherd,luo2024omegaprm,cui2025prime,chan2024dense,yoon2024tlcr}. ES exposes a sharper form: because the estimator sees only scalar reward differences, quantization creates literal ties.

Characterizing \emph{when} small populations are at risk (the availability frame) and isolating \emph{why} they then fail (normalization, not population size) are distinct steps: the first sets the stage, and the second is the core result it reveals. We make two contributions:
\begin{enumerate}
  \item \textbf{Reward granularity determines the availability threshold.} CE has $q=0$ (trivial limit, $\Navail=1$); binary has $q>0$ with $\Navail \sim \log\delta/\log q$. We give an explicit binary approximation and validate it with a zero-training probe on Qwen2.5-Instruct~\citep{qwen2.5} (0.5B/1.5B/7B) on GSM8K~\citep{cobbe2021gsm8k} (mean absolute error $0.020$ across 12 configurations).
  \item \textbf{Normalization causes small-$N$ failure.} z-score advantage normalization erases reward-scale information at small $N$ (exactly so at $N=2$). Across Qwen2.5-\{0.5B, 1.5B, 7B\} on GSM8K and TREC~\citep{liroth2002,hovy2001}, simply disabling normalization recovers a binary-reward $N=2$ run that otherwise collapses, so the apparent need for large populations is, in this regime, the implementation rather than the population.
\end{enumerate}

Our claims are intentionally narrow. We do not claim $N=2$ always works, nor that reward degeneracy alone explains every observed $N\approx 30$ threshold. The point is the joint mechanism of reward sparsity together with normalization read inside a scaling-law frame.

\section{Setup: ES as Smoothed High-Dimensional Dynamics}
\paragraph{Notation.}
Let $\theta \in \R^d$ be model parameters, $\mathcal{B}$ a batch of size $B$, $N$ the number of antithetic perturbation pairs, $\sigma > 0$ the perturbation scale, and $\varepsilon_i \sim \mathcal{N}(0,I_d)$. The pair-level reward difference is $A_i = R(\theta+\sigma\varepsilon_i,\mathcal{B}) - R(\theta-\sigma\varepsilon_i,\mathcal{B})$.

\paragraph{Raw and normalized dynamics.}
The raw ES estimator is
\begin{equation}
  \gradraw = \frac{1}{N\sigma}\sum_{i=1}^{N} A_i\,\varepsilon_i,
  \label{eq:graw}
\end{equation}
and the normalized estimator divides the centered advantages by their empirical standard deviation:
\begin{equation}
  \gradnorm = \frac{1}{N\sigma}\sum_{i=1}^{N} \frac{A_i-\bar A}{s_A}\,\varepsilon_i,
  \qquad
  \bar A = \frac{1}{N}\sum_{i=1}^{N} A_i,
  \label{eq:gnorm}
\end{equation}
where $s_A$ is the empirical standard deviation of $\{A_i\}_{i=1}^{N}$. Our experiments use the pair-level convention.

\paragraph{Rewards and smoothing.}
We compare two rewards:
\[
  R_\acc = \tfrac{1}{B} \textstyle\sum_j \1[\hat y_\theta(x_j) = y_j],
  \qquad
  R_\ce = \frac{1}{B} \textstyle\sum_j \log p_\theta(y_j \mid x_j),
\]
where $R_\acc$ is quantized in steps of $1/B$ and $R_\ce$ is continuous in the logits and requires white-box access. Earlier black-box prompt-tuning work observed a similar sparsity gap in derivative-free prompt search~\citep{sun2022bbt}; we extend it from prompt space to full-parameter ES.

ES optimizes the Gaussian-smoothed objective $f_\sigma(\theta)=\E_{\varepsilon \sim \mathcal{N}(0,I_d)}[f(\theta+\sigma\varepsilon)]$, which is $L_\sigma$-smooth with $L_\sigma = 2/\sigma^2$ even when $f$ is discontinuous (as for binary accuracy reward)~\citep{nesterov2017random}. This worst-case $L_\sigma$ appears to forbid the empirically used learning rates ($\eta \sim 10^{-3}$--$10^{-4}$), a gap that spectral decay of the Hessian can account for via average rather than worst-case curvature (Appendix~\ref{app:spectral}). Once the update is divided by the random statistic $s_A$, the estimator changes qualitatively.

\section{A Reward-Aware Population Scaling Law}
\label{sec:scaling}
We treat $N$ as a scaling variable controlling how many reward-carrying perturbation directions are available per ES update, with an availability threshold $\Navail$ below which the update typically contains no reward-carrying direction and training fails. The position of the threshold is set by reward granularity: $\Navail=1$ for dense (CE) reward (Proposition~\ref{prop:ce}); nontrivial for sparse (binary) reward (Proposition~\ref{prop:binary}).

\paragraph{Availability.}
The natural state variable is not just $N$, but the number of perturbation directions that survive degeneracy. Writing $q(\theta,\mathcal{B},\sigma)=\Prob(A=0)$ for the seed-level zero-advantage probability, we have
\begin{equation}
  \KN = \sum_{i=1}^{N}\1[A_i \neq 0],\qquad
  \Prob(\KN=0)=q^N,\qquad
  \Navail(\delta)=\left\lceil \frac{\log \delta}{\log q} \right\rceil .
  \label{eq:navail}
\end{equation}
Under the iid-seed approximation, $\E[\KN]=N(1-q)$. This turns population size into an availability question before it becomes a variance question: if $\KN=0$, the update contains no reward-carrying direction at all, and $\Navail(\delta)$ is the smallest population for which at least one non-degenerate seed appears with probability at least $1-\delta$.

\paragraph{Dense limit and $N$-cancellation.}
When $q=0$ the availability threshold collapses to $\Navail=1$, so any $N\ge 1$ is above threshold. There, the cumulative drift and diffusion across $T$ steps become $N$-independent under the fixed-budget learning-rate scaling.
\begin{proposition}[CE population indifference]
\label{prop:ce}
Let $G_i = \sigma^{-1}\bigl(R(\theta+\sigma\varepsilon_i)-R(\theta-\sigma\varepsilon_i)\bigr)\varepsilon_i$ and write the raw estimator \eqref{eq:graw} as $\hat g_N = N^{-1}\sum_{i=1}^N G_i$ to make the population dependence explicit. Under any reward continuous in $\theta$ (in particular CE),
\[
  \E[\hat g_N] = 2\,\nabla f_\sigma(\theta),
  \qquad
  \mathrm{Cov}(\hat g_N) = \Sigma(\theta)/N,
\]
with $\Sigma(\theta) = \mathrm{Cov}(G_i)$. Consequently, at fixed total perturbation-pair budget $M = NT$ and learning-rate scaling $\eta_N = N\eta_0$, the cumulative drift and diffusion across $T$ steps,
\[
  \E[\Delta\theta_N] = 2\,M\eta_0\,\nabla f_\sigma(\theta),
  \qquad
  \mathrm{Cov}(\Delta\theta_N) = M\eta_0^2\,\Sigma(\theta),
\]
are both independent of $N$ in the local linearization around $\theta$.
\end{proposition}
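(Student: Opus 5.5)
The plan is to prove the single-step moment identities first, using the Gaussian-smoothing (Stein) identity. Then we aggregate over $T$ steps in the frozen, locally linearized picture, where $\nabla f_\sigma$ and $\Sigma$ are evaluated at the base point $\theta$ throughout.

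\textbf{Step 1 (unbiasedness up to factor 2).} Write $R_\pm = R(\theta\pm\sigma\varepsilon)$ and note that $\varepsilon$ and $-\varepsilon$ have the same law. Then
\[
\E[\sigma^{-1}R(\theta-\sigma\varepsilon)\varepsilon] = -\E[\sigma^{-1}R(\theta+\sigma\varepsilon)\varepsilon],
\]
so $\E[G_i] = 2\sigma^{-1}\E[R(\theta+\sigma\varepsilon)\varepsilon]$. The Stein identity for the Gaussian-smoothed objective (as in \citep{nesterov2017random}) gives $\nabla f_\sigma(\theta) = \sigma^{-1}\E[f(\theta+\sigma\varepsilon)\varepsilon]$, with $f = R$ here. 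This identity only needs $R$ to be locally integrable with at most polynomial growth, so continuity is more than enough. The required growth condition is automatic for $R_\acc$, which is bounded. For $R_\ce$ it holds because the CE loss grows at most polynomially in the logits, which I will state as a standing assumption. Together these give $\E[G_i] = 2\nabla f_\sigma(\theta)$. Linearity then yields $\E[\hat g_N] = 2\nabla f_\sigma(\theta)$.

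\textbf{Step 2 (covariance).} The seeds $\varepsilon_i$ are iid, so the $G_i$ are iid, each with covariance $\Sigma(\theta)$. This requires $\E\|G_i\|^2<\infty$, which follows from the same growth assumption together with Gaussian moments. Hence $\mathrm{Cov}(\hat g_N) = N^{-2}\cdot N\Sigma = \Sigma/N$.

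\textbf{Step 3 (aggregation under the budget).} The update at step $t$ is $\eta_N \hat g_N^{(t)}$. In the local linearization we treat the per-step estimators as independent draws with the base-point moments. Summing the means gives
\[
\E[\Delta\theta_N] = T\eta_N\cdot 2\nabla f_\sigma = 2NT\eta_0\nabla f_\sigma = 2M\eta_0\nabla f_\sigma.
\]
Summing the variances of the independent increments gives
\[
\mathrm{Cov}(\Delta\theta_N) = T\eta_N^2\,\Sigma/N = TN\eta_0^2\Sigma = M\eta_0^2\Sigma.
\]
Neither expression depends on $N$, which is the claim.

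\textbf{Main obstacle.} The delicate step is justifying Step 3. In reality $\theta$ drifts during the $T$ steps, so the later estimators are evaluated at different points and are conditionally rather than fully independent. I would handle this by stating that the identities are exact to first order. Under the linearization, the moments are frozen at $\theta$. The cross-step covariances vanish because, conditioned on the past, each fresh step has mean $2\nabla f_\sigma(\theta)$, a deterministic quantity, so the centered increments form a martingale-difference sequence. The errors from the drifting evaluation point are of order $O(M\eta_0\cdot\|\Delta\theta\|)$ and are governed by the $L_\sigma$-smoothness of $f_\sigma$. These higher-order corrections do depend on $N$ and are explicitly excluded by the phrase "in the local linearization".

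I would also remark where continuity enters. It guarantees $q=0$, so every $G_i$ carries signal and no normalization or degeneracy event perturbs these moments. This is precisely what fails for the normalized estimator, since dividing by the random $s_A$ breaks the linearity used in Steps 1–2.
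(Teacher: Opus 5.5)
Your proposal is correct and matches the paper's proof step for step: the antithetic Stein identity gives $\E[G_i]=2\nabla f_\sigma(\theta)$, iid seeds give $\mathrm{Cov}(\hat g_N)=\Sigma(\theta)/N$, and summing $T=M/N$ independent steps with frozen moments and $\eta_N=N\eta_0$ yields the $N$-free drift $2M\eta_0\nabla f_\sigma(\theta)$ and diffusion $M\eta_0^2\Sigma(\theta)$. One caveat on your closing aside: continuity alone does not force $q=0$, since a locally constant continuous reward has $q>0$ (the paper's Corollary needs real analyticity and a nonzero CE gradient), but that remark is not used in your argument, which, as your Step~1 already shows, never uses continuity.
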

The proof is in Appendix~\ref{app:ce-proof}.

\begin{corollary}[No availability threshold under CE]
\label{cor:ce-nondegen}
Under standard smooth-network assumptions (Appendix~\ref{app:ce-proof}), $\KN = N$ almost surely whenever $\nabla_\theta R_{\ce}(\theta,\mathcal{B}) \neq 0$, and Proposition~\ref{prop:ce} applies seedwise; see Appendix~\ref{app:pop-scaling} for matching CE-reward scaling curves.
\end{corollary}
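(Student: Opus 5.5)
The plan is to show that for fixed $\theta$ and batch $\mathcal{B}$ with $\nabla_\theta R_\ce(\theta,\mathcal{B})\neq 0$, the zero set $Z=\{\varepsilon\in\R^d : R_\ce(\theta+\sigma\varepsilon)=R_\ce(\theta-\sigma\varepsilon)\}$ has Lebesgue measure zero. Since $\varepsilon_i\sim\mathcal{N}(0,I_d)$ is absolutely continuous, this gives $q=\Prob(A=0)=0$. By a union bound over the $N$ seeds, $\Prob(\KN<N)\le Nq=0$, so $\KN=N$ almost surely and $\Navail(\delta)=1$ in the sense of \eqref{eq:navail}.

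First, the standard smooth-network assumptions need to be made precise. I would take them to be that the network is built from real-analytic operations (linear maps, softmax, $\log$, and analytic activations such as GELU, SiLU or $\tanh$ approximations). Then $\theta\mapsto \log p_\theta(y_j\mid x_j)$ is real-analytic on $\R^d$, because the log-softmax of teacher-forced logits is analytic and the target token probabilities are strictly positive. It follows that $R_\ce$ is real-analytic, and so is $h(\varepsilon)=R_\ce(\theta+\sigma\varepsilon)-R_\ce(\theta-\sigma\varepsilon)$.

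Second, I would show $h\not\equiv 0$. Differentiating at $\varepsilon=0$ gives $\nabla h(0)=2\sigma\nabla_\theta R_\ce(\theta,\mathcal{B})\neq 0$. So $h$ is nonconstant near $0$, and therefore not identically zero on the connected domain $\R^d$.

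Third, I would invoke the classical fact that the zero set of a real-analytic function on a connected open set that is not identically zero has Lebesgue measure zero. This follows by induction on $d$ via Fubini, or from the Mityagin lemma. Hence $\Prob(h(\varepsilon)=0)=0$.

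The seedwise statement then follows. With $A_i\neq0$ for all $i$ almost surely, every term of \eqref{eq:graw} carries reward signal, and Proposition~\ref{prop:ce} applies unchanged, since its unbiasedness and $\Sigma/N$ covariance computations never conditioned on nondegeneracy.

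The main obstacle is justifying analyticity. ReLU networks are only piecewise analytic, and there $h$ could in principle vanish on an open piece. For that case I would fall back to a weaker route: $h$ is locally Lipschitz, and it is analytic on each of finitely many polyhedral activation regions. On the region containing a neighborhood of $0$, the nonzero gradient shows $h$ is nonconstant there. On other regions I would need a genericity assumption that $h$ is not identically zero on any region. I would state this explicitly as part of the ``standard smooth-network assumptions'' rather than try to prove it.

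A secondary subtlety is that $\nabla h(0)\neq0$ alone only controls a neighborhood of the origin. It is analyticity (the identity theorem) that propagates non-vanishing to all of $\R^d$, so the argument must not rely on the linearization alone.
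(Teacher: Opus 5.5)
Your proposal is correct and follows essentially the same route as the paper's proof. Both arguments use real-analyticity of $R_\ce$ for architectures built from analytic operations, then $\nabla h(0)=2\sigma\nabla_\theta R_\ce(\theta,\mathcal{B})\neq 0$ to get $h\not\equiv 0$, then the measure-zero zero set of a nontrivial real-analytic function, and finally absolute continuity of the Gaussian to conclude $\Prob(A_i=0)=0$ and hence $\KN=N$ almost surely. Your explicit union bound and ReLU fallback go beyond the paper, which simply restricts to analytic activations such as GELU/SwiGLU. In that aside, ReLU activation regions in parameter space are in general semialgebraic rather than polyhedral, but this lies outside the statement as posed.
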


\paragraph{Sparse regime.}
Binary reward makes $q>0$, so the threshold is nontrivial and $\KN$ becomes a genuine random count rather than identically $N$.
\begin{proposition}[Binary degeneracy approximation]
\label{prop:binary}
Under a small-perturbation, homogeneous-batch approximation,
\begin{equation}
  q=\Prob(A_{\acc}=0)\approx
  \frac{1}{\sqrt{4\pi B\,p_0(1-p_0)(1-\rho)}},
  \label{eq:qapprox}
\end{equation}
where $p_0$ is the base accuracy and $\rho$ is the correlation between the two perturbed models' per-example correctness---how often $\theta+\sigma\varepsilon$ and $\theta-\sigma\varepsilon$ get the same items right or wrong.
\end{proposition}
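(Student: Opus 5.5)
The plan is to write $A_{\acc}$ as a normalized sum of per-example differences and apply a local central limit theorem on the integer lattice. Let $X_j=\1[\hat y_{\theta+\sigma\varepsilon}(x_j)=y_j]$ and $Y_j=\1[\hat y_{\theta-\sigma\varepsilon}(x_j)=y_j]$ denote correctness of the two antithetic models on item $j$. Then
\[
  B\,A_{\acc}=\sum_{j=1}^{B} D_j,\qquad D_j=X_j-Y_j\in\{-1,0,1\},
\]
so $A_{\acc}=0$ exactly when the integer-valued sum $S_B=\sum_j D_j$ equals $0$.

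The homogeneous-batch approximation is what makes a clean formula possible. We model the $(X_j,Y_j)$ as iid across $j$, each marginally Bernoulli$(p_0)$ with correlation $\rho$. The small-perturbation approximation enters here: to leading order both perturbed models retain the base accuracy $p_0$, so the drift $\E[D_j]=\E[X_j]-\E[Y_j]$ is negligible. Formally it is $O(\sigma)$, and it is further washed out by the randomness of $\varepsilon$. We therefore set $\E[D_j]=0$ and compute
\[
  \mathrm{Var}(D_j)=\mathrm{Var}(X_j)+\mathrm{Var}(Y_j)-2\,\mathrm{Cov}(X_j,Y_j)=2p_0(1-p_0)(1-\rho).
\]
Hence $\mathrm{Var}(S_B)=2Bp_0(1-p_0)(1-\rho)$.

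The key step is the local CLT for lattice variables. $D_j$ takes values in $\{-1,0,1\}$ with nonzero mass on $\pm1$ whenever $\rho<1$. In that case its span is $1$: the support $\{-1,0,1\}$ is not contained in any proper sublattice shift, since the presence of both $0$ and $\pm1$ gives span $1$; if $\Prob(D_j=0)=0$ the span would be $2$. We then apply Gnedenko's local limit theorem:
\[
  \Prob(S_B=0)=\frac{1}{\sqrt{2\pi\,\mathrm{Var}(S_B)}}\bigl(1+o(1)\bigr)=\frac{1}{\sqrt{4\pi B\,p_0(1-p_0)(1-\rho)}}\bigl(1+o(1)\bigr).
\]
This is exactly \eqref{eq:qapprox}.

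The main obstacle is justifying this step, not the arithmetic. There are three issues.
\begin{enumerate}
  \item Zero mean. The mean of $D_j$ is only approximately zero. A drift $\mu=\E[D_j]$ multiplies the density by $\exp(-B\mu^2/(2\mathrm{Var}(D_j)))$, so we need $B\mu^2\ll1$. I would state this as the precise content of ``small perturbation'', averaging over $\varepsilon$.
  \item Independence and homogeneity. Independence across items conditional on $\varepsilon$ and homogeneous $p_0$ across items are idealizations. With heterogeneous $p_j,\rho_j$, the same argument gives variance $\sum_j 2p_j(1-p_j)(1-\rho_j)$, which reduces to the stated form under homogeneity. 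I would remark this as the natural generalization.
  \item Asymptotic regime. The result is asymptotic in $B$ with $p_0(1-p_0)(1-\rho)$ bounded away from $0$. When $(1-\rho)$ is small the lattice approximation degrades. In that regime $q\to1$, and the formula should be capped at $1$.
\end{enumerate}
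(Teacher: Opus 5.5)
Your proposal is correct and follows essentially the same route as the paper: write $B\,A_{\acc}$ as a sum of iid $\{-1,0,1\}$-valued per-item differences with mean zero and variance $2p_0(1-p_0)(1-\rho)$, apply Gnedenko's local CLT at $k=0$, generalize to heterogeneous batches by summing per-item variances, and cap the formula at $\min\{1,\cdot\}$ in the degenerate regime. The paper goes further than your $(1+o(1))$ on the error term, using an Edgeworth expansion in which $D_j^3=D_j$ forces the third cumulant to vanish so the $O(B^{-1/2})$ correction drops out, while your span check and your drift condition $B\mu^2\ll 1$ make explicit assumptions the paper leaves implicit.
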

The right-hand side is a local-CLT point-density approximation at zero (Appendix~\ref{app:binary-proof}); in the extreme sparse regime ($B$ small or $p_0$ near $\{0,1\}$) it can formally exceed $1$, in which case we use $\min\{1,\cdot\}$ for numerical work and prefer the empirical probe (Appendix~\ref{app:probe}) for calibration. Above $\Navail$, the effective non-degenerate count $\KN \to N(1-q)$ in expectation and population averaging behaves in the usual way; below $\Navail$, the random count $\KN$ itself is the bottleneck and no learning-rate rescaling can recover a gradient-carrying update from a step where $\KN=0$. Appendix~\ref{app:ce-proof} pinpoints where the CE argument breaks in this regime.

\begin{table}[t]
  \centering
  \scriptsize
  \caption{Full pre-training degeneracy probe on GSM8K at $\sigma=10^{-3}$ with $K=200$ perturbation pairs. $|\Delta q| = |\text{empirical }q - \text{predicted }q|$; mean absolute error across the 12 rows is $0.020$. $\Navail$ is computed from empirical $\hat q$ via \eqref{eq:navail} ($\delta=0.05$); applying the same formula to the predicted $q$ column disagrees on three rows. All models are the official Qwen2.5-\{0.5B, 1.5B, 7B\}-Instruct checkpoints.}
  \label{tab:probe-results}
  \begin{tabular}{lccccccc}
    \toprule
    Model & $B$ & $p_0$ & $\hat\rho$ & empirical $q$ & predicted $q$ & $|\Delta q|$ & $\Navail$ \\
    \midrule
    Qwen2.5-0.5B & 4  & 0.306 & 0.357 & 0.345 & 0.382 & 0.037 & 3 \\
    Qwen2.5-0.5B & 8  & 0.306 & 0.379 & 0.240 & 0.275 & 0.035 & 3 \\
    Qwen2.5-0.5B & 16 & 0.308 & 0.370 & 0.165 & 0.192 & 0.027 & 2 \\
    Qwen2.5-0.5B & 32 & 0.308 & 0.362 & 0.140 & 0.135 & 0.005 & 2 \\
    \midrule
    Qwen2.5-1.5B & 4  & 0.466 & 0.439 & 0.400 & 0.377 & 0.023 & 4 \\
    Qwen2.5-1.5B & 8  & 0.466 & 0.475 & 0.290 & 0.276 & 0.014 & 3 \\
    Qwen2.5-1.5B & 16 & 0.466 & 0.424 & 0.190 & 0.186 & 0.004 & 2 \\
    Qwen2.5-1.5B & 32 & 0.466 & 0.449 & 0.125 & 0.135 & 0.010 & 2 \\
    \midrule
    Qwen2.5-7B & 4  & 0.642 & 0.607 & 0.495 & 0.469 & 0.026 & 5 \\
    Qwen2.5-7B & 8  & 0.642 & 0.622 & 0.340 & 0.338 & 0.002 & 3 \\
    Qwen2.5-7B & 16 & 0.642 & 0.625 & 0.220 & 0.240 & 0.020 & 2 \\
    Qwen2.5-7B & 32 & 0.642 & 0.612 & 0.130 & 0.167 & 0.037 & 2 \\
    \bottomrule
  \end{tabular}
\end{table}

\paragraph{Empirical probe.}
Throughout, all Qwen2.5 models are the official instruction-tuned checkpoints; we abbreviate sizes as Qwen2.5-0.5B/\allowbreak1.5B/\allowbreak7B. A zero-training probe on Qwen2.5-Instruct (0.5B/\allowbreak1.5B/\allowbreak7B) on GSM8K validates \eqref{eq:qapprox} with MAE $0.020$ across 12 configurations (Table~\ref{tab:probe-results}; probe protocol in Appendix~\ref{app:probe}). For Qwen2.5-1.5B at $B=16$, $\hat q \approx 0.19$, giving $\Prob(\KN=0)\approx 0.036$ at $N=2$---so the availability threshold is small here, yet normalized binary-reward ES still fails at $N=2$. The remaining mechanism is \S\ref{sec:norm}. This section characterizes \emph{when} small $N$ is at risk (availability), while \S\ref{sec:norm} explains \emph{why} it then fails (normalization).

\section{Normalization Distorts the Threshold}
\label{sec:norm}
Since $\Prob(\KN=0) \approx 0.036$ at $N=2$ for Qwen2.5-1.5B, pure availability cannot explain the $N=2$ collapse---the remaining mechanism is advantage normalization. Raw ES is \emph{self-annealing}: if reward differences shrink near degeneracy, the update shrinks with them. Z-score normalization replaces that magnitude with a small-$N$ scale estimate computed from a handful of sparse advantages, removing the self-annealing and providing an ES-specific instance of how scale-uncontrolled training signals can create new failure modes~\citep{gao2025effective}.

\begin{proposition}[Two-sample z-score erases advantage scale]
\label{prop:zscore}
Let $A_1 \neq A_2$ be pair-level advantages, let $\bar A = (A_1+A_2)/2$, and let $s_A$ be their empirical standard deviation. Then the normalized vector
\[
  \left(\frac{A_1-\bar A}{s_A},\frac{A_2-\bar A}{s_A}\right)
\]
is independent of the absolute gap $|A_1-A_2|$ up to a convention-dependent constant. With population standard deviation it is $(\pm 1,\mp 1)$; with sample standard deviation it is $(\pm 1/\sqrt{2},\mp 1/\sqrt{2})$.
\end{proposition}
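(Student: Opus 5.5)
The plan is a direct computation. Set $D = A_1 - A_2 \neq 0$ and write both centered advantages in terms of $D$ alone. Since $\bar A = (A_1+A_2)/2$, we get $A_1 - \bar A = D/2$ and $A_2 - \bar A = -D/2$. Centering therefore removes the common offset $\bar A$ entirely, and what remains is antisymmetric with magnitude $|D|/2$.

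Next I compute $s_A$ under each convention, again in terms of $D$.
\begin{itemize}
\item With the population standard deviation (divisor $N=2$), $s_A^2 = \tfrac12\bigl((D/2)^2 + (D/2)^2\bigr) = D^2/4$, so $s_A = |D|/2$.
\item With the sample standard deviation (divisor $N-1 = 1$), $s_A^2 = D^2/2$, so $s_A = |D|/\sqrt{2}$.
\end{itemize}
The hypothesis $A_1 \neq A_2$ is used exactly here: it guarantees $s_A > 0$, so the quotient is defined.

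Dividing, the normalized vector is $\bigl(\operatorname{sgn}(D), -\operatorname{sgn}(D)\bigr)$ in the population case and $\bigl(\operatorname{sgn}(D)/\sqrt{2}, -\operatorname{sgn}(D)/\sqrt{2}\bigr)$ in the sample case. In both cases it depends on $(A_1, A_2)$ only through $\operatorname{sgn}(A_1 - A_2)$. It is therefore invariant under any rescaling $A_i \mapsto cA_i$ with $c > 0$ and any shift $A_i \mapsto A_i + b$; in particular it does not depend on $|A_1 - A_2|$. This gives the stated forms $(\pm1, \mp1)$ and $(\pm1/\sqrt2, \mp1/\sqrt2)$, with the sign fixed by which advantage is larger.

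No step is a real obstacle. The only care needed is to state the two standard-deviation conventions precisely and to note that the constant $1$ versus $1/\sqrt2$ is exactly the "convention-dependent constant" in the statement. I would close by substituting back into \eqref{eq:gnorm}: at $N=2$ this gives $\gradnorm = \frac{c}{2\sigma}\operatorname{sgn}(A_1 - A_2)(\varepsilon_1 - \varepsilon_2)$ with $c \in \{1, 1/\sqrt2\}$. Unlike \eqref{eq:graw}, this update no longer shrinks as the reward gap shrinks, which is the loss of self-annealing the section discusses.
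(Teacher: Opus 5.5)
Your proposal is correct and follows essentially the same route as the paper's proof: center to get $\pm(A_1-A_2)/2$, compute $s_A=|A_1-A_2|/2$ or $|A_1-A_2|/\sqrt{2}$ under the two conventions, and divide to obtain a pure sign vector. Your closing substitution, which gives the $N=2$ normalized update as $\frac{c}{2\sigma}\operatorname{sgn}(A_1-A_2)(\varepsilon_1-\varepsilon_2)$, is also correct; the paper makes this point about lost self-annealing only in the surrounding text rather than in the proof.
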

The proof is in Appendix~\ref{app:norm-analysis}. Z-score normalization always removes absolute scale, but at $N=2$ even ratio information disappears: an infinitesimal advantage gap and a large advantage gap therefore produce the same normalized update magnitude.

Under sparse binary rewards this matters most. Many $A_i$ are exactly zero, and the nonzero ones can be very small near degeneracy: with raw advantages such steps naturally decay, but normalization promotes the one or two surviving nonzero seeds to fixed-size updates while estimating $s_A$ from only a handful of sparse rewards. Normalization does not merely standardize the signal---it changes the stochastic dynamics.

\begin{figure}[t]
  \centering
  \includegraphics[width=\linewidth]{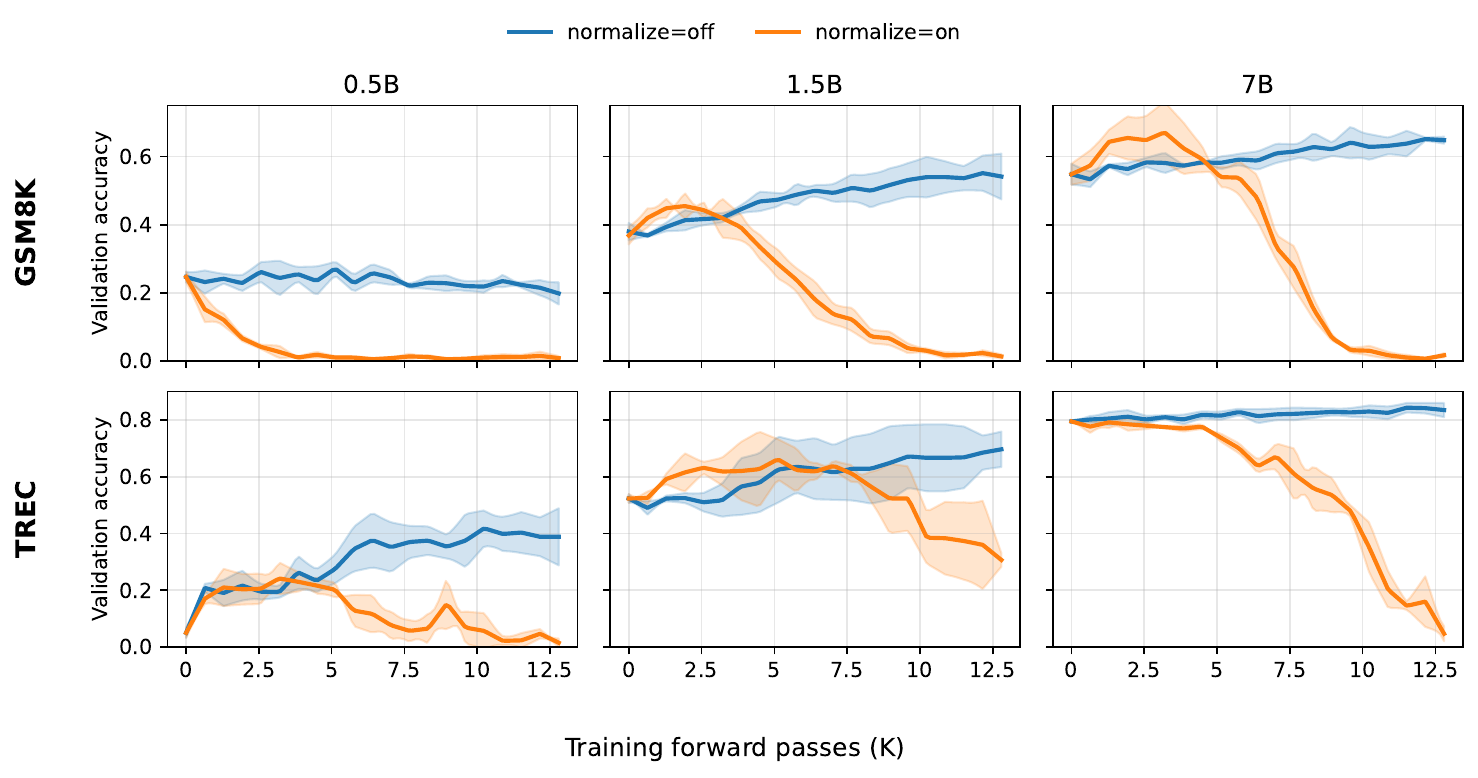}
  \caption{\textbf{Turning off advantage normalization recovers small-population ES across model sizes and tasks.}
  Advantage-normalization ablation at $N=2$ with binary reward (seeds 42/43/44; mean $\pm$ std),
  $\sigma=10^{-3}$, $\eta=10^{-3}$, $B=16$, 200 iterations;
  for Qwen2.5-\{0.5B, 1.5B, 7B\} (columns) on GSM8K (top row) and TREC (bottom row).
  In every panel, z-score normalization (\emph{on}, orange) collapses (i.e. accuracy degrades toward zero) %
  whereas raw advantages (\emph{off}, blue) preserve reward-scale information.
  The upward learning under raw advantages appears once the base model is capable enough for the task:
  it holds for 1.5B/7B on both tasks and for 0.5B on TREC, while 0.5B on GSM8K only avoids collapse
  without improving (its base accuracy too low for GSM8K reasoning).
  Thus the \emph{on}-collapse is robust across sizes and tasks, whereas whether \emph{off} improves
  (rather than merely holding) depends on base-model capability.}
  \label{fig:norm_ablation}
  \vspace{-0.75em}
\end{figure}

The binary population-scaling curves in Appendix~\ref{app:pop-scaling} use the default normalized implementation; Figure~\ref{fig:norm_ablation} is a single fixed-everything ablation that varies only the treatment of advantages, now spanning Qwen2.5-\{0.5B, 1.5B, 7B\} on GSM8K and TREC. With normalization on, the $N=2$ run collapses toward zero accuracy in every size/task cell---on the capable models it briefly rises before collapsing. With normalization off, the same binary-reward ES setup avoids collapse everywhere, improving steadily to, e.g., roughly $0.55$ validation accuracy on 1.5B/GSM8K. The two-part reading is that the normalization-on collapse is robust to model size, whereas the upward improvement under raw advantages depends on base capability: it appears wherever the base model is already competent at the task (1.5B/7B on both tasks, 0.5B on TREC) and is absent only for 0.5B on GSM8K, where raw ES merely avoids collapse without improving. We do not claim $N=2$ is universally sufficient or that normalization explains every small-$N$ failure, but in these settings, the apparent $N=2$ collapse is the implementation, not the population size. Within the scaling-law frame, normalization is the implementation knob that decides whether $N$ near $\Navail$ behaves like the raw self-annealing regime or like a fixed-magnitude random walk that ignores the very scale information $\Navail$ was defined to track.

\section{Discussion and Limitations}
Practical recommendation: estimate $\Navail$ via the zero-training probe (Appendix~\ref{app:probe}) and start with the smallest $N$ that clears it. In the Qwen2.5-Instruct/GSM8K regime we measured, $\Navail \leq 5$ across all 12 configurations and $N=2$ with raw advantages was already viable. Tune upward only if optimizer stability, normalization, or wall-clock parallelism require it; near $\Navail$, raw or clipped-std advantages are safer than z-scoring. This complements curvature-based ES analyses~\citep{liang2026blessing}: geometry determines whether useful directions exist; reward sparsity and normalization determine whether scalar feedback reveals them.

Three caveats. (a) The normalization-off recovery of $N=2$ is consistent across three model sizes and two tasks (Qwen2.5-\{0.5B, 1.5B, 7B\} on GSM8K and TREC): the normalization-on collapse appears in every case, whereas whether raw ES \emph{improves} (rather than merely avoiding collapse) depends on base-model capability---so we still do not claim $N=2$ is universally sufficient. (b) The degeneracy model is an availability bound under homogeneous-batch, small-perturbation assumptions; \eqref{eq:qapprox} is an approximation, and we treat $\rho$ as a complementary diagnostic only---it indexes the variance of the reward difference but does not predict whether the surviving differences align with useful parameter directions. (c) Proposition~\ref{prop:ce}'s drift--diffusion invariance is a moment-level statement under local linearization at fixed $\theta$; we do not derive a descent-level $N$-cancellation result and make no claim about whether different $N$ values produce equivalent full training trajectories at fixed compute.

\acks{This work originated as a course project for STAT 4830 at the University of Pennsylvania; we thank Damek Davis for his guidance. Compute credits were provided by Prime Intellect.}

\newpage
\bibliography{sample}

\clearpage
\appendix

\section{Algorithmic Conventions: MeZO and ES-at-Scale}
\label{app:conventions}

\paragraph{Estimator normalization.}
The two paradigms our paper engages with differ only in the constant in front of the antithetic difference~\citep{salimans2017openai,malladi2023mezo}. MeZO uses
\begin{equation}
  \hat g_{\mathrm{MeZO}} = \frac{R(\theta+\sigma\varepsilon) - R(\theta-\sigma\varepsilon)}{2\sigma}\,\varepsilon,
  \label{eq:mezo}
\end{equation}
which satisfies $\E[\hat g_{\mathrm{MeZO}}] = \nabla f_\sigma(\theta)$ directly via the antithetic Stein identity (Appendix~\ref{app:smoothing}). ES-at-scale \citep{qiu2025esscale} uses the $1/\sigma$ form of \eqref{eq:graw} instead, giving $\E[\hat g_N] = 2\,\nabla f_\sigma(\theta)$ with the factor of two absorbed into the learning rate. The two are equivalent at the level of the expected update once learning rates are matched: at literal hyperparameter $\eta$, the ES estimator moves twice as far per step as MeZO, so ES's effective rate is $2\eta$ relative to MeZO. We follow the $1/\sigma$ convention throughout to align with \citet{qiu2025esscale}; Proposition~\ref{prop:ce} and Appendix~\ref{app:smoothing} are stated under the same convention.

\paragraph{Which quantities are normalized.}
Our analysis assumes the implementation normalizes the pair-level advantages $A_i=R(\theta+\sigma\varepsilon_i)-R(\theta-\sigma\varepsilon_i)$ across $i=1,\dots,N$. Under this convention, the $N=2$ scale-erasure claim of Proposition~\ref{prop:zscore} follows from two-sample z-scoring of $(A_1,A_2)$; it does \emph{not} require assuming that the algorithm separately normalizes the positive and negative rollouts or that the pair-level advantages are exactly $\{+a,-a\}$. Some MeZO and ES implementations optionally divide $A_i$ by the mini-batch standard deviation; whether this helps depends on the regime, as Section~\ref{sec:norm} shows for binary reward.

\paragraph{Conceptual difference between the two paradigms.}
The distinction between MeZO and ES-at-scale is not which prefactor is ``correct''---both produce unbiased estimates of $\nabla f_\sigma$ up to a constant factor. The substantive difference is the reward signal:
\begin{itemize}
  \item \textbf{MeZO} uses CE reward, which is white-box and incurs no availability threshold (Corollary~\ref{cor:ce-nondegen}); $N=1$ is sufficient.
  \item \textbf{ES-at-scale} uses binary accuracy reward, which is black-box compatible but sparse (Proposition~\ref{prop:binary}); the reported $N \approx 30$ is consistent with either mechanism in our account---a substantial availability threshold in their base-accuracy regime, advantage normalization (Section~\ref{sec:norm}), or both---and, having measured neither directly in their setting, we do not assert which dominates there; in our own Qwen2.5/GSM8K probe availability turned out small, leaving normalization as the operative factor.
\end{itemize}
Both choices are essentially optimal for their setting. The $N \approx 30$ floor may reflect the interaction of binary reward with the base-accuracy regime of their structured-prompt evaluation---with normalization choices plausibly contributing as much---rather than a fundamental property of the LLM or task; in our Qwen2.5/GSM8K probe (Appendix~\ref{app:probe}), the same mechanism gives a much smaller $\Navail$.

\paragraph{Clipped-standard-deviation normalization.}
When we refer to a possible remedy, we mean replacing \eqref{eq:gnorm} by
\[
  \tilde A_i = \frac{A_i-\bar A}{\max(s_A,\varepsilon_0)}
\]
for a floor $\varepsilon_0>0$. We do not apply this clipped rule in our experiments (equivalently $\varepsilon_0=0$); it is a proposed intervention rather than a validated result here.

\section{Gaussian Smoothing, Stein Identity, and \texorpdfstring{$L_\sigma$}{Lsigma}-Smoothness}
\label{app:smoothing}

\paragraph{Definition and approximation gap.}
Let $f:\R^d \to \R$ be the reward objective and define $f_\sigma(\theta) = \E_\varepsilon[f(\theta+\sigma\varepsilon)]$ with $\varepsilon \sim \mathcal{N}(0, I_d)$. For $M$-Lipschitz $f$, $|f_\sigma(\theta) - f(\theta)| \leq M\sigma$; at $\sigma = 10^{-3}$ this gap is negligible compared to initial suboptimality.

\paragraph{Stein gradient identity.}
By Gaussian integration by parts, $\nabla f_\sigma(\theta) = \sigma^{-1}\,\E[f(\theta+\sigma\varepsilon)\,\varepsilon]$. For the antithetic pair, $\E[(f(\theta+\sigma\varepsilon) - f(\theta-\sigma\varepsilon))\,\varepsilon] = 2\sigma\,\nabla f_\sigma(\theta)$. The $1/(2\sigma)$-prefactor estimator (e.g., MeZO) is unbiased for $\nabla f_\sigma$ exactly; the $1/\sigma$ convention $\gradraw$ used in the main text differs by a factor of two absorbed into the learning rate. Normalizing by the data-dependent statistic $s_A$ changes the estimator qualitatively and removes this direct unbiasedness interpretation.

\paragraph{$L_\sigma$-smoothness.}
\begin{theorem}[\citealt{nesterov2017random}]
\label{thm:smooth}
If $|f| \leq 1$, then $f_\sigma$ is $L_\sigma$-smooth with $L_\sigma = 2/\sigma^2$.
\end{theorem}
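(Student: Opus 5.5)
We prove Theorem~\ref{thm:smooth} by writing $\nabla f_\sigma$ through the Stein identity, so that no derivative of $f$ is ever needed. Since $f$ is only assumed bounded, we will not differentiate $f$ at all. Instead we use the representation $f_\sigma(\theta)=\int f(u)\,\phi_\sigma(u-\theta)\,du$, where $\phi_\sigma$ is the $\mathcal{N}(0,\sigma^2 I_d)$ density. Differentiating under the integral is justified by dominated convergence, because $|f|\le 1$ and the Gaussian derivatives are integrable. This gives exactly the Stein form from Appendix~\ref{app:smoothing}:
\[
\nabla f_\sigma(\theta)=\sigma^{-1}\E\bigl[f(\theta+\sigma\varepsilon)\varepsilon\bigr].
\]
The goal is then to bound $\|\nabla f_\sigma(\theta)-\nabla f_\sigma(\theta')\|$ by $(2/\sigma^2)\|\theta-\theta'\|$.

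The main route is to bound the Hessian, because $f_\sigma$ is $C^\infty$. Differentiating once more under the integral gives
\[
\nabla^2 f_\sigma(\theta)=\sigma^{-2}\,\E\bigl[f(\theta+\sigma\varepsilon)(\varepsilon\varepsilon^\top-I_d)\bigr].
\]
Fix a unit vector $v$. Then
\[
|v^\top\nabla^2 f_\sigma(\theta)v|\le\sigma^{-2}\,\E\bigl|(v^\top\varepsilon)^2-1\bigr|,
\]
using $|f|\le 1$. Here $v^\top\varepsilon\sim\mathcal{N}(0,1)$. Applying the triangle inequality gives $\E|Z^2-1|\le \E Z^2+1=2$. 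Hence the operator norm of the Hessian is at most $2/\sigma^2$. By the mean value theorem along the segment from $\theta$ to $\theta'$, this yields Lipschitz continuity of the gradient with $L_\sigma=2/\sigma^2$.

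As a cross-check, one can follow Nesterov--Spokoiny's gradient-difference argument without the Hessian. Write
\[
\nabla f_\sigma(\theta)-\nabla f_\sigma(\theta')=\sigma^{-1}\int f(u)\,\frac{u-\theta}{\sigma}\,\bigl(\phi_\sigma(u-\theta)-\phi_\sigma(u-\theta')\bigr)\,du,
\]
after appropriate recentring, and bound it using $|f|\le1$ and Gaussian total-variation estimates. This path produces constants like $\sqrt d/\sigma^2$ unless handled carefully, so the Hessian route is preferable. It is dimension-free because it tests only one direction $v$ at a time.

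The main obstacle is the interchange of differentiation and expectation for a possibly discontinuous $f$, such as binary accuracy. This is handled by moving all derivatives onto the Gaussian kernel. Then $|f|\le 1$ together with the integrability of $\|u\|^2\phi_\sigma(u)$ gives a dominating function, locally uniformly in $\theta$. A secondary point is that $\E|Z^2-1|$ actually equals $4/(\sqrt{2\pi e})\approx 0.97$, so the constant $2$ is loose. The statement only claims $L_\sigma=2/\sigma^2$, so the crude triangle-inequality bound suffices.
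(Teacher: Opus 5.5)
Your proposal is correct and follows essentially the same route as the paper's proof. Both use the second-order Stein (Gaussian integration-by-parts) formula for $\nabla^2 f_\sigma$, then bound the quadratic form by $|v^\top \nabla^2 f_\sigma(\theta) v| \le \sigma^{-2}\E|(v^\top\varepsilon)^2-1| \le 2/\sigma^2$ for unit $v$, and pass to the operator norm. Your additions are details the paper leaves implicit: justifying differentiation under the integral by putting the derivatives on the Gaussian kernel, the mean-value step to a Lipschitz gradient, and the remark that the sharp constant is $4/\sqrt{2\pi e}\approx 0.97$.
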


\begin{proof}
The second-order Stein identity gives $[\nabla^2 f_\sigma(\theta)]_{kl} = \sigma^{-2}\,\E_\varepsilon[f(\theta+\sigma\varepsilon)(\varepsilon_k\varepsilon_l - \delta_{kl})]$. For any unit vector $v\in\R^d$,
\[
  v^\top \nabla^2 f_\sigma\, v = \frac{1}{\sigma^2}\,\E_\varepsilon\!\left[f(\theta+\sigma\varepsilon)\bigl((v\cdot\varepsilon)^2 - 1\bigr)\right].
\]
Bounding with $|f| \leq 1$ and $\E[(v\cdot\varepsilon)^2] = 1$ gives $|v^\top \nabla^2 f_\sigma\, v| \leq \sigma^{-2}\E[(v\cdot\varepsilon)^2 + 1] = 2/\sigma^2$. This holds for all unit $v$, so $\|\nabla^2 f_\sigma\|_\mathrm{op} \leq 2/\sigma^2 = L_\sigma$.
\end{proof}

The crucial feature is that no smoothness assumption on $f$ is required: binary accuracy reward is a step function ($L = \infty$ classically) yet $f_\sigma$ has finite smoothness. The descent lemma for $f_\sigma$ is therefore exact:
\[
  \E[f_\sigma(\theta + \eta\hat g)] - f_\sigma(\theta) \geq \eta\|\nabla f_\sigma\|^2 - \frac{\eta^2 L_\sigma}{2}\,\E[\|\hat g\|^2].
\]

\begin{remark}[Cocoercivity without PL]
$L_\sigma$-smoothness alone gives $\|\nabla f_\sigma(\theta)\|^2 \leq 2L_\sigma(f_\sigma^* - f_\sigma(\theta))$: a single ascent step at rate $1/L_\sigma$ from $\theta$ improves $f_\sigma$ by $\|\nabla f_\sigma\|^2/(2L_\sigma)$, and this cannot exceed $f_\sigma^* - f_\sigma(\theta)$. No PL condition is needed.
\end{remark}

\section{CE Population Indifference and Non-Degeneracy}
\label{app:ce-proof}

\paragraph{Proof of Proposition~\ref{prop:ce}.}
With $G_i = \sigma^{-1}(R(\theta+\sigma\varepsilon_i)-R(\theta-\sigma\varepsilon_i))\varepsilon_i$ and $\hat g_N = N^{-1}\sum_i G_i$, the antithetic Stein identity from Appendix~\ref{app:smoothing} gives
\[
  \E\!\left[(R(\theta+\sigma\varepsilon)-R(\theta-\sigma\varepsilon))\,\varepsilon\right]
  = 2\sigma\,\nabla f_\sigma(\theta),
\]
hence $\E[G_i] = 2\,\nabla f_\sigma(\theta)$ and $\E[\hat g_N] = 2\,\nabla f_\sigma(\theta)$. The seeds $\varepsilon_i$ are iid, so the $G_i$ are iid with covariance $\Sigma(\theta) = \mathrm{Cov}(G_i)$, and
\[
  \mathrm{Cov}(\hat g_N) = \frac{1}{N}\,\Sigma(\theta).
\]
For the budget argument, fix $\theta$ and consider one ES step at population $N$ with learning rate $\eta_N = N\eta_0$. The single-step expected drift is
\[
  \eta_N\,\E[\hat g_N] = 2\,N\eta_0\,\nabla f_\sigma(\theta),
\]
and the single-step covariance is
\[
  \eta_N^2\,\mathrm{Cov}(\hat g_N) = N\eta_0^2\,\Sigma(\theta).
\]
Across $T = M/N$ independent steps, under the local linearization that $\theta$ does not drift far from the reference point so that $\nabla f_\sigma$ and $\Sigma$ are constant,
\[
  \E[\Delta\theta_N] = T\,\eta_N\,\E[\hat g_N] = 2\,M\eta_0\,\nabla f_\sigma(\theta),
  \qquad
  \mathrm{Cov}(\Delta\theta_N) = T\,\eta_N^2\,\mathrm{Cov}(\hat g_N) = M\eta_0^2\,\Sigma(\theta).
\]
Both quantities are independent of $N$.

\paragraph{Proof of Corollary~\ref{cor:ce-nondegen}.}
Fix a batch $\mathcal{B}$ and define
\[
  h(\varepsilon)=R_{\ce}(\theta+\sigma\varepsilon,\mathcal{B})
  -R_{\ce}(\theta-\sigma\varepsilon,\mathcal{B}).
\]
Under standard transformer architectures---compositions of analytic operations (matrix multiplication, softmax, smooth activations such as GELU/SwiGLU)---$R_{\ce}(\theta,\mathcal{B})$ is real analytic in $\theta$, so $h$ is real analytic in $\varepsilon$. Its gradient at the origin is
\[
  \nabla_\varepsilon h(0) = 2\sigma\,g_{\ce}(\theta,\mathcal{B}),
  \qquad
  g_{\ce}(\theta,\mathcal{B})=\nabla_\theta R_{\ce}(\theta,\mathcal{B})
  = \frac{1}{B}\sum_{j=1}^{B}\nabla_\theta \log p_\theta(y_j\mid x_j).
\]
If $g_{\ce}(\theta,\mathcal{B}) \neq 0$, then $h$ is not identically zero; the zero set of a non-identically-zero real analytic function on $\R^d$ has Lebesgue measure zero, and since the Gaussian measure is absolutely continuous with respect to Lebesgue measure, $\Prob(h(\varepsilon)=0) = 0$. Hence $\KN = \sum_i \1[A_i \neq 0] = N$ almost surely, and the variance-averaging structure used in the proof of Proposition~\ref{prop:ce} is well-defined seedwise.

\paragraph{Where the argument breaks for binary reward.}
The decomposition $\mathrm{Cov}(\hat g_N) = \Sigma/N$ assumed each seed contributes independently to the per-step covariance. Under binary reward, this fails because the effective sample size in the variance averaging is the random count $\KN$, not $N$. When $\KN = 0$, the single-step update is either zero (raw advantages) or normalization-dependent (normalized advantages), and no learning-rate rescaling $\eta_N \propto N$ can recover a gradient-carrying update. This is the precise sense in which CE population indifference is dense-reward-specific.

\section{Binary Degeneracy: Full Proof}
\label{app:binary-proof}

\paragraph{Setup.}
Let $X_j^+, X_j^- \in \{0,1\}$ be the correctness indicators for example $j$ under $\theta\pm\sigma\varepsilon$. Under the homogeneous-batch approximation, $\E[X_j^+] = \E[X_j^-] = p_0$ and $\mathrm{Corr}(X_j^+, X_j^-) = \rho$. Define $Y_j = X_j^+ - X_j^- \in \{-1, 0, 1\}$; then $\E[Y_j] = 0$ and $\mathrm{Var}(Y_j) = 2p_0(1-p_0)(1-\rho) \triangleq v$. The batch-level accuracy difference is proportional to $S_B = \sum_{j=1}^B Y_j$, and $A_\acc = 0$ iff $S_B = 0$.

\paragraph{Local CLT, not standard CLT.}
The standard CLT approximates CDFs, but we need the point probability $\Prob(S_B = 0)$. Gnedenko's local CLT works directly on integer-valued PMFs: for iid integer summands $Y_1, \ldots, Y_B$ with mean $0$ and variance $v$,
\begin{equation}
  \left|\Prob(S_B = k) - \frac{1}{\sigma_S\sqrt{2\pi}}\,\varphi\!\left(\frac{k}{\sigma_S}\right)\right|
  \leq \frac{C\,\E[|Y_j|^3]}{\sigma_S^3},
  \label{eq:local-clt}
\end{equation}
where $\sigma_S^2 = Bv$, $\varphi$ is the standard normal density, and $C > 0$ is a universal constant. Setting $k = 0$ and $\varphi(0) = 1/\sqrt{2\pi}$ gives the leading-order $\Prob(S_B = 0) = (2\pi B v)^{-1/2}$ with a Berry--Esseen error of $O(B^{-1/2})$.

\paragraph{$O(B^{-1})$ error: vanishing third cumulant.}
For our $Y_j \in \{-1, 0, +1\}$ the error is one order better. Since $Y_j^3 = Y_j$ exactly, the third cumulant
\[
  \kappa_3 = \E[Y_j^3] = \E[Y_j] = 0.
\]
The Edgeworth expansion at $k = 0$ reads
\[
  \Prob(S_B = 0) = \frac{1}{\sqrt{2\pi Bv}}\!\left[1 + \frac{\kappa_3}{6 v^{3/2}\sqrt{B}}H_3(0) + \frac{\kappa_4}{24 v^2 B}H_4(0) + O(B^{-3/2})\right],
\]
with $H_3(0) = 0$ and $H_4(0) = 3$. The $O(B^{-1/2})$ Berry--Esseen term proportional to $\kappa_3 H_3(0)$ vanishes identically, leaving the $O(B^{-1})$ fourth-cumulant term as leading. Substituting $\sigma_S^2 = Bv$ yields Proposition~\ref{prop:binary}:
\[
  \Prob(A_\acc = 0) = \frac{1}{\sqrt{4\pi B\,p_0(1-p_0)(1-\rho)}} + O(B^{-1}).
\]

\paragraph{Non-IID heterogeneity.}
For heterogeneous batches with per-example variance $v_j = 2p_j(1-p_j)(1-\rho_j)$, Petrov's non-iid local CLT applies with $\mathrm{Var}(S_\mathrm{het}) = \sum_j v_j = B\bar v$ where $\bar v = B^{-1}\sum_j v_j$. At leading order, $\Prob(S_\mathrm{het}=0) \approx (2\pi B\bar v)^{-1/2}$---the same as Proposition~\ref{prop:binary} with $v$ replaced by $\bar v$; higher-order corrections depend on the empirical distribution of $\{v_j\}$ and we do not characterize them here. The worst case is bimodal accuracy ($p_j$ near $0$ or $1$), where each $v_j \approx 0$ and $\Prob(A=0) \to 1$. The empirical probe (Appendix~\ref{app:probe}) measures $\hat\Prob(A=0)$ directly and is preferred whenever the homogeneous-batch assumption is doubtful.

\section{Pre-Training Degeneracy Probe: Protocol and Results}
\label{app:probe}

The degeneracy probe is a forward-pass-only diagnostic that we use to validate Proposition~\ref{prop:binary} and that practitioners can also run before committing to a population size.

\paragraph{Protocol.}
\begin{enumerate}
  \item Sample $K$ perturbation pairs $\{\varepsilon_i\}_{i=1}^K$ from $\mathcal{N}(0,I_d)$ at the $\sigma$ intended for training.
  \item Evaluate $R(\theta_0+\sigma\varepsilon_i,\mathcal{B})$ and $R(\theta_0-\sigma\varepsilon_i,\mathcal{B})$ on a fixed batch of size $B$. Total cost: $2KB$ forward passes.
  \item Estimate the zero-advantage rate $\hat q = K^{-1}\,|\{i : R(\theta_0+\sigma\varepsilon_i,\mathcal{B}) = R(\theta_0-\sigma\varepsilon_i,\mathcal{B})\}|$, the base accuracy $p_0$, and the intra-pair correctness correlation $\hat\rho$ from the paired indicators.
  \item Compute $\Navail(\delta) = \lceil \log\delta / \log \hat q \rceil$ at the desired confidence (we use $\delta = 0.05$ throughout).
\end{enumerate}

For $K=200$ and $B=16$, the probe costs $6{,}400$ forward passes (on the order of a handful of training iterations) and returns a data-driven population floor that bypasses the homogeneous-batch approximation in Proposition~\ref{prop:binary}. We recommend it as a cheaper alternative to sweeping $N$ empirically: under binary reward, any choice of $N < \Navail$ is dominated by the availability failure mode of Section~\ref{sec:scaling}, irrespective of optimizer or learning-rate tuning. The same probe outputs also yield $\hat\rho$ at no extra forward-pass cost, giving a complementary read on whether the local reward landscape is becoming nearly frozen ($\rho \to 1$).

\paragraph{Results.}
Table~\ref{tab:probe-results} (in the main text, Section~\ref{sec:scaling}) reports the probe applied to Qwen2.5-Instruct (0.5B, 1.5B, 7B) on GSM8K with $K=200$ and $\sigma=10^{-3}$ across $B \in \{4, 8, 16, 32\}$.

\section{Population Scaling Under CE vs Binary Reward}
\label{app:pop-scaling}
Figure~\ref{fig:pop-scaling} below shows population-scaling trajectories under CE versus binary reward. The binary runs all use the default advantage-normalized ES implementation; the normalization ablation is reported separately in Section~\ref{sec:norm}.

\begin{figure}[ht]
  \centering
  \includegraphics[width=\linewidth]{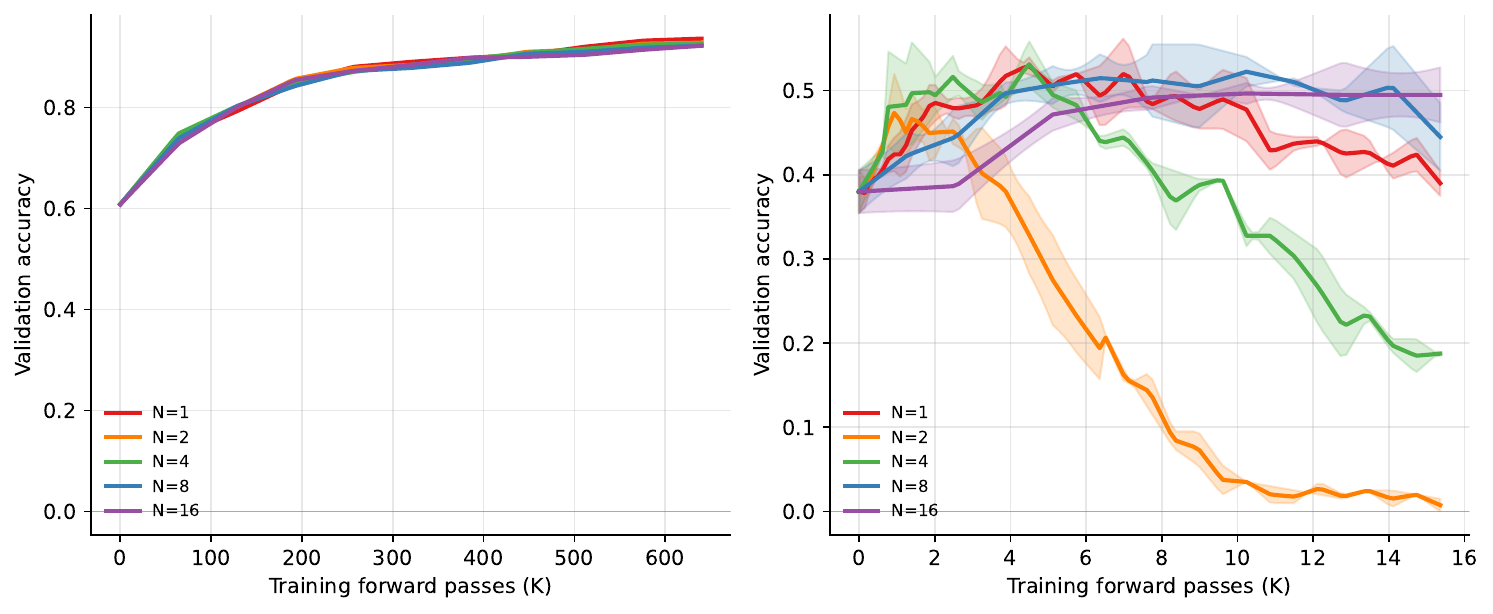}\\[0.2em]
  \begin{minipage}[t]{0.5\linewidth}\centering {\small (a) CE reward on OPT-13B~\citep{zhang2022opt}/SST-2~\citep{socher2013sst}.}\end{minipage}%
  \hfill
  \begin{minipage}[t]{0.5\linewidth}\centering {\small (b) Binary reward on Qwen2.5-1.5B (normalized ES).}\end{minipage}
  \caption{\textbf{Reward choice changes the population-scaling regime.}
  Left: under CE reward with appropriate learning-rate scaling, trajectories are nearly population-indifferent, consistent with CE advantages being nonzero almost surely. Right: under binary reward with the default advantage-normalized ES implementation, trajectories show apparent population dependence ($N=2$ and $N=4$ fail while larger populations are more stable); this reflects the normalized setting rather than an intrinsic population limit---Section~\ref{sec:norm} isolates normalization as the driver (ablation in Figure~\ref{fig:norm_ablation}), so these curves should not be read as failures of raw-advantage ES at the same population sizes.
  Binary-reward curves (right) show mean $\pm$ std over seeds 42/43/44.}
  \label{fig:pop-scaling}
\end{figure}

\section{Advantage Normalization Analysis}
\label{app:norm-analysis}
\paragraph{Proof of Proposition~\ref{prop:zscore}.}
For two numbers $A_1,A_2$, their mean is $\bar A=(A_1+A_2)/2$. Their centered values are
\[
  A_1-\bar A = \frac{A_1-A_2}{2},
  \qquad
  A_2-\bar A = -\frac{A_1-A_2}{2}.
\]
With the population-standard-deviation convention,
\[
  s_A = \sqrt{\frac{(A_1-\bar A)^2+(A_2-\bar A)^2}{2}}
      = \frac{|A_1-A_2|}{2},
\]
so the normalized vector is $(\operatorname{sign}(A_1-A_2),-\operatorname{sign}(A_1-A_2))$. With the sample-standard-deviation convention,
\[
  s_A = \sqrt{(A_1-\bar A)^2+(A_2-\bar A)^2}
      = \frac{|A_1-A_2|}{\sqrt{2}},
\]
so the normalized vector is $(\operatorname{sign}(A_1-A_2)/\sqrt{2},-\operatorname{sign}(A_1-A_2)/\sqrt{2})$. In either case, the absolute gap $|A_1-A_2|$ cancels.

\paragraph{Affine scale invariance for general $N$.}
For any vector $A \in \R^N$, z-score normalization is invariant to positive affine transformations $A \mapsto cA + b\mathbf{1}$. Thus normalization always removes absolute scale. At larger $N$, however, the centered \emph{ratios} among coordinates still survive, which is why the pathology weakens with $N$.

\paragraph{$N=4$ example.}
Suppose the four pair-level advantages are $(0,0,\epsilon,2\epsilon)$ with $\epsilon>0$. After z-score normalization, the result is exactly the same as for $(0,0,1,2)$. Absolute magnitude has disappeared; only the ratio pattern survives. This is less extreme than the $N=2$ case, because the coordinates are not collapsed to a fixed sign vector, but it still removes the self-annealing of raw ES.

\paragraph{Why small $N$ is unstable under sparse binary reward.}
Binary rewards are quantized, so many $A_i$ are exactly zero and the surviving nonzero ones often come from one or two perturbation pairs. In that regime, the empirical standard deviation $s_A$ is estimated from very few support points. The normalized update therefore mixes two effects: it discards absolute reward scale and simultaneously amplifies the noise of a poorly estimated scale statistic. A clipped-standard-deviation rule,
\[
  \tilde A_i = \frac{A_i-\bar A}{\max(s_A,\varepsilon_0)},
\]
is one direct way to prevent this amplification while retaining some of normalization's scale control.

\section{Spectral Decay: Resolving the Theory-Practice Gap}
\label{app:spectral}

This appendix sits outside the main scaling-law argument: it addresses the stability concern raised by the worst-case $L_\sigma$ bound of Theorem~\ref{thm:smooth}, which would forbid the empirically feasible learning rates used throughout this paper. Readers willing to take the empirical $\eta$ regime as given may skip it.

Theorem~\ref{thm:smooth} gives $L_\sigma = 2/\sigma^2 \approx 2 \times 10^6$ at $\sigma = 0.001$, implying stability $\eta < \sigma^2/2 \approx 5 \times 10^{-7}$. Yet ES fine-tuning runs successfully at $\eta \sim 10^{-3}$--$10^{-4}$, a gap of $3$--$4$ orders of magnitude. We trace this to a geometric property of the ES estimator.

\paragraph{Average curvature, not worst-case.}
The ES estimator $\hat{g} = (N\sigma)^{-1}\sum_i A_i\varepsilon_i$ is a \emph{random isotropic direction} in $\R^d$: the perturbations $\varepsilon_i$ have no preferred direction. Therefore the curvature experienced per ES step is
\begin{equation}
  \frac{\E[\hat{g}^\top H\hat{g}]}{\E[\|\hat{g}\|^2]}
  = \frac{\mathrm{tr}(H)}{d} = \bar\lambda,
\end{equation}
where $\bar\lambda = \mathrm{tr}(H)/d$ is the \emph{mean} eigenvalue, not the worst-case $\lambda_1$. The scalar bound $L_\sigma = 2/\sigma^2$ corresponds to $\lambda_1$; the effective quantity is $\bar\lambda$.

\paragraph{Spectral decay.}
For pretrained LLMs, Hessian eigenvalues follow approximate power-law decay $\lambda_k \propto k^{-\beta}$ with $\beta \approx 2$~\citep{liang2026blessing}. With effective rank $r$ and dimension $d$,
\begin{equation}
  L_\sigma^\mathrm{true} \approx
  \frac{2\bar\lambda}{\sigma^2}
  \approx \frac{2r}{d(\beta-1)\sigma^2}.
  \label{eq:lsigma-true}
\end{equation}

\paragraph{Numerical verification.}
At $\sigma = 10^{-3}$, $d = 10^9$, $r = 10^2$, $\beta = 2$: $L_\sigma^\mathrm{true} \approx 2 \times 10^{-7} \times 10^{6} = 0.2$, giving stability $\eta < 1/0.2 = 5$. This is consistent with empirical $\eta \sim 10^{-3}$--$10^{-4}$.

\begin{center}
\begin{tabular}{lccc}
\toprule
Bound & Value & Stability $\eta <$ & Status \\
\midrule
Scalar $L_\sigma = 2/\sigma^2$ & $2\times 10^6$ & $5\times 10^{-7}$ & Too conservative \\
Spectral decay $L_\sigma^\mathrm{true}$ & $0.2$ & $5$ & Consistent with practice \\
Empirical & --- & $10^{-3}$--$10^{-4}$ & Matches spectral bound \\
\bottomrule
\end{tabular}
\end{center}

\paragraph{Sensitivity to spectral parameters.}
The values $\beta \approx 2$ and $r \approx 100$ are drawn from \citet{liang2026blessing}, which studies LLM fine-tuning landscapes broadly. These have \emph{not} been verified for Qwen2.5/GSM8K specifically; quantitative predictions ($L_\sigma^\mathrm{true} \approx 0.2$, stability $\eta < 5$) should be treated as order-of-magnitude estimates until directly measured.

The sensitivity is moderate: $L_\sigma^\mathrm{true} \propto r/(\beta-1)$, so:

\begin{center}
\begin{tabular}{ccccc}
\toprule
$\beta$ & $r$ & $L_\sigma^\mathrm{true}$ & Stability $\eta <$ & Status \\
\midrule
1.5 & 50  & 0.2 & 5 & Consistent \\
2.0 & 100 & 0.2 & 5 & Nominal \\
2.5 & 200 & 0.27 & 3.7 & Consistent \\
2.0 & 50  & 0.1 & 10 & More permissive \\
1.5 & 200 & 0.8 & 1.25 & Tighter, still $\gg 10^{-4}$ \\
\bottomrule
\end{tabular}
\end{center}

Across the plausible range $\beta \in [1.5, 2.5]$ and $r \in [50, 200]$, $L_\sigma^\mathrm{true}$ varies by roughly $4\times$ and the stability threshold remains many orders of magnitude above empirical $\eta \sim 10^{-3}$--$10^{-4}$. The qualitative conclusion that spectral decay resolves the theory-practice gap is robust to this uncertainty. For precise quantitative predictions on a specific model, the probe (Appendix~\ref{app:probe}) provides direct empirical calibration.

\paragraph{The $r/d$ factor and the stability gap.}
The spectral correction $r/d \approx 10^{-7}$ resolves the theory-practice stability gap by replacing worst-case curvature with average curvature experienced by isotropic perturbations (Appendix~\ref{app:spectral}). The isotropy of ES perturbations in $\R^d$ is the single geometric fact that makes the stability bound empirically meaningful.

\end{document}